\documentclass{article}

\usepackage{iclr2026_conference,times}

\usepackage{microtype}
\usepackage{graphicx}
\usepackage{subcaption}
\usepackage{booktabs}
\usepackage{amsmath,amssymb}
\usepackage{wrapfig}
\usepackage{caption}
\usepackage{enumitem}
\usepackage{hyperref}
\usepackage{amsmath}
\usepackage{amssymb}
\usepackage{mathtools}
\usepackage{amsthm}
\usepackage{multirow}
\usepackage{float}
\usepackage{tabularx}
\usepackage{makecell}
\usepackage{svg}

\newtheorem{theorem}{Theorem} 
\newtheorem{lemma}{Lemma}
\newtheorem{assumption}{Assumption}

\newtheorem{definition}[theorem]{Definition}

\DeclareUnicodeCharacter{22EF}{\(\cdots\)}

\usepackage{titlesec}
\titlespacing{\section}{0pt}{0pt}{0pt}
\titlespacing{\subsection}{0pt}{2pt plus 0pt minus 2pt}{2pt plus 0pt minus 2pt}

\title{Manifold-Aware Temporal Domain Generalization for Large Language Models}

\author{\textbf{Yiheng Yao}$^{1}$ \quad
    \textbf{Zekun Cai}$^{1,2*}$ \quad
    \textbf{Xinyuan Song}$^{3}$ \quad
    \textbf{Hiroki Hill Kobayashi}$^{1}$ \\
    \textbf{Xuan Song}$^{4}$ \quad
    \textbf{Ryosuke Shibasaki}$^{2}$ \quad
    \textbf{Liang Zhao}$^{3}$ \\
    $^{1}$The University of Tokyo, Tokyo, Japan \quad
    $^{2}$LocationMind, Tokyo, Japan \\
    $^{3}$Emory University, Atlanta, GA, USA \quad
    $^{4}$Jilin University, Changchun, China \\
    \texttt{yihengyao@g.ecc.u-tokyo.ac.jp, caizekun@csis.u-tokyo.ac.jp} \\
    \texttt{\{xinyuan.song,liang.zhao\}@emory.edu, songxuan@jlu.edu.cn} \\
    \texttt{kobayashi@ds.itc.u-tokyo.ac.jp, shiba@locationmind.com}
}

\iclrfinalcopy

\begin{document}

\maketitle

\renewcommand{\thefootnote}{}
\footnotetext{* Corresponding author}
\renewcommand{\thefootnote}{\arabic{footnote}}

\vspace{-2em}
\begin{abstract}
\vspace{-1em}
Temporal distribution shifts are pervasive in real-world deployments of Large Language Models (LLMs), where data evolves continuously over time. While Temporal Domain Generalization (TDG) seeks to model such structured evolution, existing approaches characterize model adaptation in the full parameter space. This formulation becomes computationally infeasible for modern LLMs. This paper introduces a geometric reformulation of TDG under parameter-efficient fine-tuning. We establish that the low-dimensional temporal structure underlying model evolution can be preserved under parameter-efficient reparameterization, enabling temporal modeling without operating in the ambient parameter space. Building on this principle, we propose Manifold-aware Temporal LoRA (MaT-LoRA), which constrains temporal updates to a shared low-dimensional manifold within a low-rank adaptation subspace, and models its evolution through a structured temporal core. This reparameterization dramatically reduces temporal modeling complexity while retaining expressive power. Extensive experiments on synthetic and real-world datasets, including scientific documents, news publishers, and review ratings, demonstrate that MaT-LoRA achieves superior temporal generalization performance with practical scalability for LLMs.
\end{abstract}

\vspace{-1em}
\section{Introduction}

Domain generalization (DG) \cite{tremblay2018training,huang2020self,du2020learning,qiao2020learning,wang2022generalizing} aims to learn models that generalize to unseen domains without access to target-domain data, and has become a central problem for reliable deployment of machine learning systems. In many real-world scenarios, however, domains are not isolated but are interconnected through underlying physical dynamics or evolving structures. Temporal domain generalization (TDG) \cite{li2020sequential,nasery2021training,qin2022generalizing,bai2023temporal,yong2023continuous,zeng2024generalizing,cai2024continuous,yu2025learning} represents a prominent and practically important instance of DG, where domains are indexed by time and assume data distributions evolve sequentially.

\begin{wrapfigure}{l}{0.55\textwidth}
  \centering
  \vspace{-15pt}
  \includegraphics[width=0.48\textwidth]{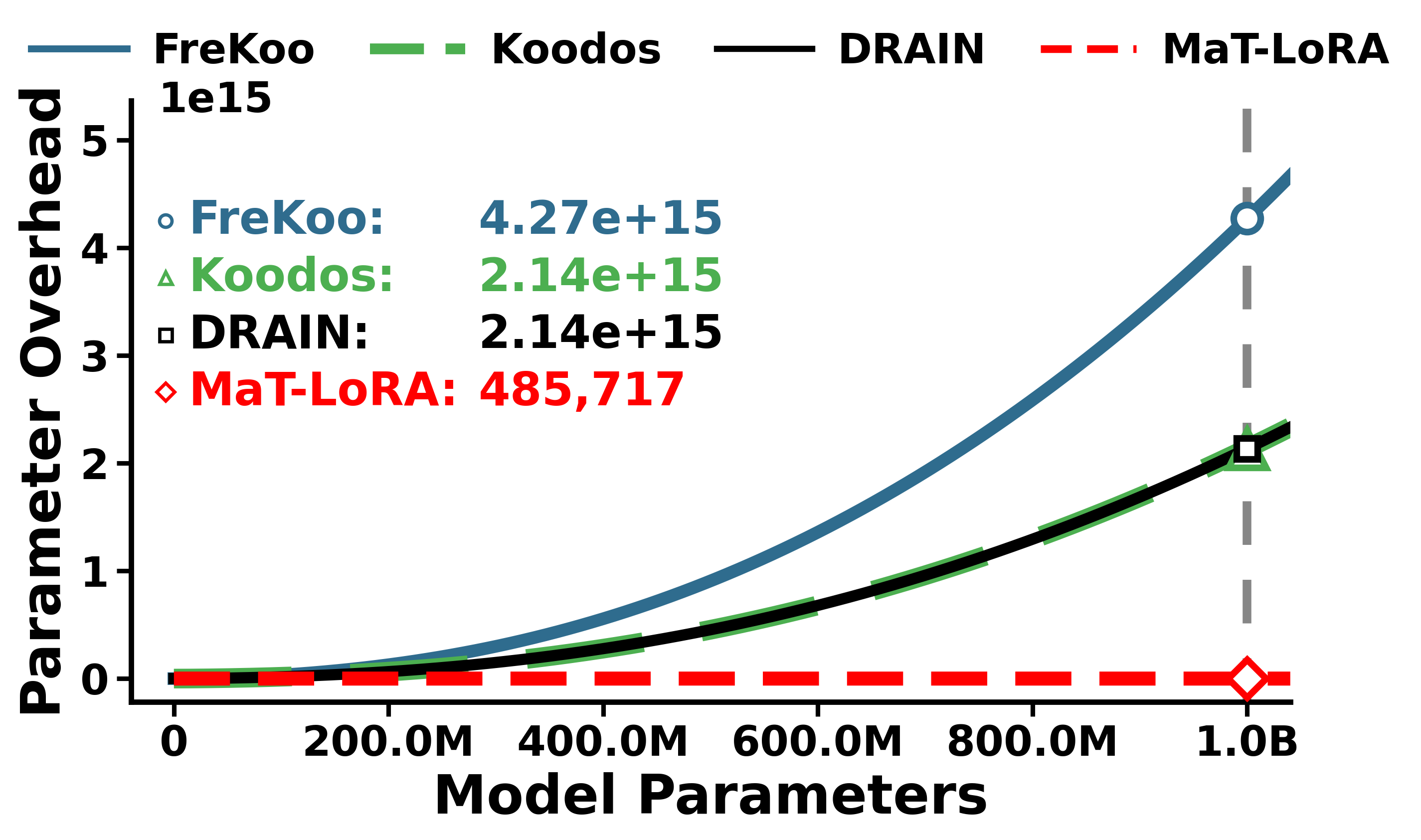}
  \caption{Scalability bottleneck of TDG. Full-parameter TDG scales prohibitively with model size, whereas MaT-LoRA maintains overhead nearly constant and yields over $10^{10}\!\times$ reduction at the 1B pre-trained model scale.
  }
  \label{fig:complexity}
  \vspace{-2em}
\end{wrapfigure}

TDG formalizes generalization as trajectories of model parameters in a high-dimensional space, aiming to synchronize model evolution with distribution dynamics to enable generalization to future domains. State-of-the-art approaches characterize parameter evolution using mechanisms such as LSTM architectures~\cite{bai2023temporal}, continuous-time neural dynamics~\cite{cai2024continuous}, and spectral analyses of parameter trajectories~\cite{yu2025learning}. Despite their methodological differences, these approaches aim to uncover low-dimensional structure within full-parameter trajectories. However, learning such a structure requires operating over the entire high-dimensional parameter space, leading to parameter overhead that scales with model size. As shown in Fig.~\ref{fig:complexity}, this overhead becomes prohibitively large for billion-parameter models, revealing the scalability limitation of TDG.

It has become increasingly evident that large language models (LLMs) deployed in real-world environments are also subject to distribution shifts analogous to those studied in TDG. Empirical analyses reveal that model performance degrades as the evaluation time moves further away from the training period~\cite{luu2022time}, and simply scaling up model size does not eliminate this temporal generalization gap~\cite{lazaridou2021mind}. Importantly, recent studies suggest that temporal variation in fine-tuned LLMs follows structured trajectories in weight space rather than random drift. Fine-tuned LLMs from neighboring time periods exhibit closely aligned update directions, and interpolating between them often yields effective models for unseen times with minimal additional training~\cite{nylund2024time,Dziadzio_2025_CVPR}. These empirical findings suggest that temporal model evolution in large-scale LLMs is also governed by low-dimensional dynamics, motivating the extension of TDG principles from moderate-scale models to modern large-scale LLMs.

Extending TDG to large-scale LLMs is non-trivial due to several fundamental challenges. \textbf{1) Mismatching of the parameter space for generalization.} Classical TDG characterizes temporal evolution as trajectories in the full parameter space, assuming that model parameters can be freely optimized over time. In contrast, LLMs are typically fine-tuned via parameter-efficient fine-tuning, which restricts learning to parameter increments around a pretrained initialization. Consequently, temporal trajectories are expressed in a different parameter space. \textbf{2) Expressing temporal dynamics under constrained updates.} The widely adopted LoRA-based fine-tune method~\cite{hu2022lora} for LLMs is confined to a low-rank subspace of the parameter space. Under such structural restriction, it is unclear whether the intrinsic low-dimensional temporal dynamics identified in TDG can remain fully representable, and whether the associated geometric structure can be preserved.

To bridge this gap, we develop a \underline{Ma}nifold-aware \underline{T}emporal LoRA (MaT-LoRA). We reinterpret generalization as occurring in a translated increment space anchored at a pretrained initialization. We show that if full-parameter optima lie on a low-dimensional manifold, then the corresponding parameter increments inherit the same intrinsic geometric structure via diffeomorphic translation. Building on this observation, we further identify that under low-rank adaptation, temporal updates should not be treated as independent rank-constrained solutions, but as points residing on a shared low-dimensional manifold embedded within the rank-restricted space. This perspective shifts the objective from fitting disconnected per-domain adapters to identifying and modeling the underlying manifold structure governing temporal evolution. MaT-LoRA confines temporal evolution independent of the LLMs size, but with the intrinsic dimensionality of the shared update subspace. As illustrated in Fig.~\ref{fig:complexity}, MaT-LoRA reduces the effective temporal modeling complexity by more than ten orders of magnitude for TDG on billion-scale models. This makes TDG practically feasible for modern LLMs. We validate this framework through extensive experiments on both synthetic benchmarks and three real-world datasets, results demonstrating that MaT-LoRA consistently outperforms existing baselines while maintaining computational efficiency suitable for large-scale LLMs models.

\section{Related Work}

\textbf{Domain Generalization (DG) and Temporal Domain Generalization (TDG)}
Domain Generalization (DG) aims to generalize to unseen target domains, a paradigm that Temporal Domain Generalization (TDG) extends by explicitly modeling time-varying data distributions. Following the taxonomy established by \cite{wu2025out}, existing TDG methodologies are organized into three primary categories: (1) \textit{Data Distribution}, delineates the nature of temporal shifts into Covariate Shift~\cite{kim2021reversible,kulinski2020feature,zhang2025batch} and Concept Drift~\cite{wen2023onenet, zhang2022dynamic}; (2) \textit{Representation Learning}, encompasses Decoupling-based~\cite{zhou2022fedformer}, Invariant-based~\cite{hu2023boosting}, and Adaptive Mechanism-based frameworks~\cite{wen2023onenet}, alongside the integration of Large Time Series Models~\cite{jin2024time}; and (3) \textit{Out-Of-Distribution (OOD) Evaluation}, stratifies assessment protocols into Extrinsic~\cite{kim2021reversible,zhang2022dynamic,sun2025learning} and Intrinsic~\cite{xie2023evolving,cai2024continuous,cai2025continuous} paradigms. 

\textbf{Parameter-Efficient Fine-Tuning (PEFT)}
Parameter-Efficient Fine-Tuning (PEFT) offers a resource-efficient paradigm for adapting large-scale models to diverse downstream tasks. Following the taxonomy proposed by \cite{han2024parameterefficient}, PEFT strategies can be categorized into four distinct groups: \textit{additive PEFT}, augments the model architecture by injecting additional trainable modules or parameters(~\cite{houlsby2019parameter,mahabadi2021parameter}); \textit{selective PEFT}, designates only a specific subset of existing parameters to be updated during fine-tuning(~\cite{liu2022few,sung2021training}); \textit{reparameterized PEFT}, optimizes a low-dimensional reparameterization of the model weights during training that is subsequently merged for inference(~\cite{hu2022lora,wu2024mixture}; and \textit{hybrid PEFT} (~\cite{mao2022unipelt, zhou2024autopeft}).

Current methodologies present a fundamental dichotomy. Traditional TDG methods rigorously model temporal dynamics but scale poorly with increasing parameter dimensionality. Conversely, PEFT strategies excel at adapting large-scale models yet typically treat adaptation as a static process, failing to capture the continuous evolution inherent in distributional shifts. The isolation of these paradigms highlights a critical research gap: the lack of a unified framework that efficiently reconciles temporal dynamic modeling with the computational constraints of LLMs.

\section{Problem definition}

Temporal Domain Generalization (TDG) for large language models (LLMs) studies adaptation under continuous distribution shift over time. We consider a sequence of \(T\) source domains \(\{\mathcal{D}_t\}_{t=1}^{T}\), where
 \begin{equation}
\mathcal{D}_t=\{(x_i^{(t)},y_i^{(t)})\}_{i=1}^{N_t},
 \end{equation}
and \(x_i^{(t)}\), \(y_i^{(t)}\), and \(N_t\) denote inputs, labels, and sample size at timestamp \(t\in\mathbb{R}^1\). The shift is modeled as a temporal evolution of the conditional distribution \(P_t(Y\mid X)\) over \(\mathcal{X}_t\times \mathcal{Y}_t\), implying a time-varying input--output relationship.

Starting from a pretrained model \(W_{\text{pre}}\), we represent adaptation at time \(t\) by an update \(\Delta W_t\), yielding the mapping \(g_{W_{\text{pre}}+\Delta W_t}:\mathcal{X}_t\to\mathcal{Y}_t\). TDG aims to learn and extrapolate the trajectory \(\{\Delta W_t\}_{t=1}^T\) to infer suitable weights for the future unseen domains \(\{\mathcal{D}_{T+1},\mathcal{D}_{T+2},\ldots\}\).

\section{Methodology}

In this section, we develop MaT-LoRA(Manifold-aware Temporal LoRA), a framework for temporal domain generalization of LLMs. We first motivate a low-dimensional manifold structure for the trajectory of optimal updates under distribution shift. We then introduce a Manifold-Constrained Factorization that separates time-invariant spatial bases from a time-varying temporal core, retaining the expressivity of per-domain adapters while using constant memory. Finally, we describe general ways to parameterize the temporal core, from structured continuous dynamical systems to flexible arbitrary functions, to match different types of temporal evolution.

\vspace{-0.5em}
\subsection{The Geometry of Parameter Increments for Temporal Domains}
\label{sec:param_increment}
\vspace{-0.5em}

In many temporal learning problems, distribution shift is driven by coherent physical, social, or semantic factors rather than arbitrary stochastic noise. We thus assume that the conditional distribution \(P_t(Y\mid X)\) evolves over time according to an unknown but deterministic latent dynamics.

Prior work in TDG~\cite{cai2024continuous,cai2025continuous} suggests that the sequence of full-parameter optima \(\{W_t^*\}_{t=1}^T\)
lies on a low-dimensional embedded manifold \(\mathcal{M}\) in the ambient parameter space.
We now formalize the corresponding statement in the parameter-increment space.

\begin{definition}[Parameter increments]
\label{def:param_increments}
Fix a pretrained initialization \(W_{\mathrm{pre}}\in\mathbb{R}^p\).
For each time step \(t\in[T]\), define the optimal increment
 \begin{equation}
\Delta W_t^*  :=  W_t^* - W_{\mathrm{pre}} .
 \end{equation}
Define the map \(\phi:\mathbb{R}^p\to\mathbb{R}^p\) by
 \begin{equation}
\phi(W)  :=  W - W_{\mathrm{pre}} .
 \end{equation}
\end{definition}

\begin{lemma}[Embedded manifolds]
\label{lem:translation_manifold}
Let \(\mathcal{M}\subset \mathbb{R}^p\) be an embedded \(m\)-dimensional submanifold.
Then \(\mathcal{M}' := \phi(\mathcal{M}) = \mathcal{M}-W_{\mathrm{pre}}\) is also an embedded
\(m\)-dimensional submanifold of \(\mathbb{R}^p\).
\end{lemma}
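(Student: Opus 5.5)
The plan is to observe that $\phi$ is a diffeomorphism of the ambient space $\mathbb{R}^p$ and then transport the local slice charts of $\mathcal{M}$ through it. First, $\phi(W)=W-W_{\mathrm{pre}}$ is a translation. It is smooth, and its inverse $\phi^{-1}(V)=V+W_{\mathrm{pre}}$ is also smooth. The differential of $\phi$ is $D\phi\equiv I_p$ at every point. Hence $\phi$ is a global diffeomorphism of $\mathbb{R}^p$, and in particular a homeomorphism.

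Next I will use the slice-chart characterization of embedded submanifolds. A subset $S\subset\mathbb{R}^p$ is an embedded $m$-dimensional submanifold iff for every $q\in S$ there are an open neighborhood $U\ni q$ and a diffeomorphism $\psi:U\to\psi(U)\subset\mathbb{R}^p$ with
\[
\psi(U\cap S)=\psi(U)\cap(\mathbb{R}^m\times\{0\}).
\]
Take $q'\in\mathcal{M}'$ and write $q'=\phi(q)$ with $q\in\mathcal{M}$. Let $(U,\psi)$ be a slice chart for $\mathcal{M}$ at $q$. Set $U':=\phi(U)$, which is open because $\phi$ is a homeomorphism, and set $\psi':=\psi\circ\phi^{-1}|_{U'}$. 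Then $\psi'$ is a composition of diffeomorphisms, hence a diffeomorphism onto $\psi(U)$.

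Since $\phi$ is a bijection, $U'\cap\mathcal{M}'=\phi(U\cap\mathcal{M})$. Therefore
\[
\psi'(U'\cap\mathcal{M}')=\psi(U\cap\mathcal{M})=\psi(U)\cap(\mathbb{R}^m\times\{0\})=\psi'(U')\cap(\mathbb{R}^m\times\{0\}).
\]
This shows $(U',\psi')$ is an $m$-slice chart for $\mathcal{M}'$ at $q'$. So $\mathcal{M}'$ is an embedded $m$-dimensional submanifold.

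An equivalent route would note that $\phi|_{\mathcal{M}}:\mathcal{M}\to\mathbb{R}^p$ is a smooth injective immersion that is a homeomorphism onto its image. This holds because it is the composition of the inclusion, which is an embedding, with the diffeomorphism $\phi$. Hence $\phi|_{\mathcal{M}}$ is an embedding, and embeddings have embedded submanifold images of the same dimension.

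There is no real obstacle here. The only point needing care is checking that the image is an *embedded* submanifold and not merely an immersed one, that is, that the subspace topology is preserved. This is handled by $\phi$ being a global homeomorphism of the ambient space, which is used in the identity $U'\cap\mathcal{M}'=\phi(U\cap\mathcal{M})$. Dimension is preserved because $D\phi=I_p$ is invertible, and the slice structure carries over unchanged.
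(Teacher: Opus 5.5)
Your proof is correct and takes the same approach as the paper: both show that the translation $\phi$ is a global diffeomorphism of $\mathbb{R}^p$, and both conclude that it maps embedded submanifolds to embedded submanifolds of the same dimension. The only difference is that you prove this standard fact explicitly by transporting slice charts, whereas the paper simply cites it.
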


\begin{proof}
The map \(\phi(W)=W-W_{\mathrm{pre}}\) is smooth and bijective with smooth inverse
\(\phi^{-1}(U)=U+W_{\mathrm{pre}}\); hence \(\phi\) is a diffeomorphism on \(\mathbb{R}^p\).
The image of an embedded submanifold under a diffeomorphism is an embedded submanifold
of the same dimension. Therefore \(\mathcal{M}'=\phi(\mathcal{M})\) is an embedded
\(m\)-dimensional submanifold.
\end{proof}

\begin{lemma}[Parameter-increment manifold]
\label{lem:peft_manifold}
Assume \(\{W_t^*\}_{t=1}^T \subset \mathcal{M}\) for some embedded \(m\)-dimensional submanifold
\(\mathcal{M}\subset\mathbb{R}^p\). Then \(\{\Delta W_t^*\}_{t=1}^T \subset \mathcal{M}'\),
where \(\mathcal{M}' := \mathcal{M}-W_{\mathrm{pre}}\) is also an embedded \(m\)-dimensional submanifold.
\end{lemma}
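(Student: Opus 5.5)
The plan is to reduce the statement to Lemma~\ref{lem:translation_manifold} together with an elementwise membership check. Two things need to be shown: that \(\mathcal{M}'\) is an embedded \(m\)-dimensional submanifold, and that each increment \(\Delta W_t^*\) lies in \(\mathcal{M}'\).

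For the first part, I apply Lemma~\ref{lem:translation_manifold} directly to the given manifold \(\mathcal{M}\). It yields that \(\phi(\mathcal{M})=\mathcal{M}-W_{\mathrm{pre}}\) is an embedded \(m\)-dimensional submanifold of \(\mathbb{R}^p\). Since this is exactly the set \(\mathcal{M}'\) in the statement, no further work is needed here.

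For the membership claim, fix \(t\in[T]\). By hypothesis \(W_t^*\in\mathcal{M}\). By Definition~\ref{def:param_increments},
\begin{equation*}
\Delta W_t^* = W_t^*-W_{\mathrm{pre}}=\phi(W_t^*)\in\phi(\mathcal{M})=\mathcal{M}'.
\end{equation*}
Since \(t\) was arbitrary, this gives \(\{\Delta W_t^*\}_{t=1}^T\subset\mathcal{M}'\).

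There is no real obstacle. The only point needing care is to note that the set \(\mathcal{M}-W_{\mathrm{pre}}\) in the statement coincides with the image \(\phi(\mathcal{M})\) used in Lemma~\ref{lem:translation_manifold}, and that \(\Delta W_t^*\) is literally \(\phi(W_t^*)\). The geometric content (diffeomorphism invariance of embedded submanifolds) is already supplied by Lemma~\ref{lem:translation_manifold}.
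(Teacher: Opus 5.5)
Your proof is correct and is essentially the paper's argument. The paper likewise writes \(\Delta W_t^*=\phi(W_t^*)\in\phi(\mathcal{M})=\mathcal{M}'\) for each \(t\) and obtains the embedded-submanifold claim directly from Lemma~\ref{lem:translation_manifold}.
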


\begin{proof}
By Definition~\ref{def:param_increments}, \(\Delta W_t^*=\phi(W_t^*)\).
Since \(W_t^*\in \mathcal{M}\) for each \(t\), it follows that
\(\Delta W_t^*\in \phi(\mathcal{M})=\mathcal{M}'\). The manifold claim follows from
Lemma~\ref{lem:translation_manifold}.
\end{proof}

This result motivates working in the parameter-increment space: modeling temporal generalization via \(\Delta W_t\) provides a principled reparameterization for large pretrained models under temporal domain shifts, and it supports extrapolation by learning trajectories on a low-dimensional manifold.

\vspace{-0.5em}
\subsection{Manifold-Constrained Low-Rank Factorization}
\vspace{-0.5em}

TDG aims to model the continuous evolution of the data-generating process. For smaller models, prior methods (e.g.,~\cite{bai2023temporal,cai2024continuous}) often project the full parameter trajectory onto a compact latent manifold to obtain a compressed representation of temporal dynamics. For LLMs, such global dimensionality reduction is computationally infeasible due to the scale of the weight matrices, and directly regressing full weights can overwrite pretrained generalization~\cite{luo2025empirical,li2024revisiting}. We therefore shift the perspective from compressing the full parameter space to modeling temporal generalization within a constrained update subspace that matches the inherently low-rank structure of effective model variations.

\vspace{-0.5em}
\subsubsection{Geometric Incoherence in Discrete Generalization}
\vspace{-0.5em}

Low-Rank Adaptation (LoRA)~\cite{hu2022lora} is not only parameter efficient; it restricts adaptation to an intrinsic low-dimensional coordinate system. Specifically, LoRA constrains the update to \(W_{\text{pre}}+BA\), where \(B\in\mathbb{R}^{d\times r}\) and \(A\in\mathbb{R}^{r\times k}\), so that \(\Delta W=BA\) has rank at most \(r\). This confines optimization to a much smaller subset of the ambient parameter space and makes temporal evolution of updates easier to model. Consequently, learning temporal distribution shift via parameter dynamics reduces to modeling a time-varying embedded submanifold within the rank-\(r\) update space.

In the standard approach, each domain \(t\) is fitted with an independent LoRA factorization \((B_t,A_t)\) \cite{hu2022lora}. While this enables domain-specific adaptation, it ignores temporal structure. In over-parameterized regimes, the loss surface is highly non-convex and admits many distinct solutions with similarly low loss; empirically, different optima can be connected by low-loss curves (mode connectivity), indicating large near-optimal sets rather than isolated points \cite{garipov2018loss,draxler2018essentially}. As a result, independently optimizing \(\{(B_t,A_t)\}_{t=1}^T\) can yield updates that are scattered even within the low-rank update space, which makes extrapolation to unseen domains \(\{\mathcal{D}_{T+1},\ldots\}\) ill-posed.

In contrast, if the full-parameter optima exhibit a manifold structure, then the same structure holds in the increment space by Lemmas~\ref{lem:translation_manifold}--\ref{lem:peft_manifold}. Specifically, assuming \(\{W_t^*\}_{t=1}^T\subset\mathcal{M}\) for some embedded low-dimensional submanifold \(\mathcal{M}\subset\mathbb{R}^p\), Lemma~\ref{lem:translation_manifold} implies that the translated set \(\mathcal{M}'=\mathcal{M}-W_{\mathrm{pre}}\) remains an embedded submanifold of the same intrinsic dimension. Lemma~\ref{lem:peft_manifold} then yields \(\{\Delta W_t^*\}_{t=1}^T\subset\mathcal{M}'\). Therefore, optimal increments are not arbitrary points in the ambient rank-\(r\) space; they are constrained to lie on a compact low-dimensional manifold (up to translation). This shifts the modeling objective from fitting disconnected per-domain adapters to identifying and modeling the underlying manifold structure so that temporal extrapolation is well-defined.

\vspace{-0.5em}
\subsubsection{Subspace-Shared Time-Varying Parameterization}
\vspace{-0.5em}

Let $\{\Delta W_t = B_t A_t\}_{t=1}^T$ be a sequence of LoRA updates where $B_t \in \mathbb{R}^{d \times r}$ and $A_t \in \mathbb{R}^{r \times k}$. The global column subspace could be $\mathcal{C} = \text{span}(\bigcup_{t=1}^T \text{col}(B_t))$ and the global row subspace be $\mathcal{R} = \text{span}(\bigcup_{t=1}^T \text{row}(A_t))$. Then, there exist fixed basis matrices $B \in \mathbb{R}^{d \times r'}$ and $A \in \mathbb{R}^{r' \times k}$, where $r'$ satisfies $r \ge r' \ge \max(\dim(\mathcal{C}), \dim(\mathcal{R}))$.
Then, construct $B$ such that its columns form a basis for the global column subspace $\mathcal{C}$. Since $\text{col}(B_t) \subseteq \mathcal{C} = \text{col}(B)$ for all $t$, each column of $B_t$ can be expressed as a linear combination of the columns of $B$. Thus, there exists a unique coefficient matrix $C_t \in \mathbb{R}^{r' \times r'}$ such that:\begin{equation}B_t = B C_t\end{equation}Similarly, construct $A$ such that its rows form a basis for the global row subspace $\mathcal{R}$. Since $\text{row}(A_t) \subseteq \mathcal{R} = \text{row}(A)$, each row of $A_t$ lies in the row space of $A$. Thus, there exists a unique coefficient matrix $D_t \in \mathbb{R}^{r' \times r'}$ such that:\begin{equation}A_t = D_t A\end{equation}Substituting these factorizations back into the original update equation:
\begin{equation}
\Delta W_t = B_t A_t = (B C_t) (D_t A) = B (C_t D_t) A
\end{equation}
To enforce geometric consistency and enable extrapolation, by defining the core matrix as $F_t := C_t D_t \in \mathbb{R}^{r' \times r'}$, we introduce the Manifold-Constrained Factorization, where the time-varying update $\Delta W(t)$ could be decomposed into time-invariant ambient bases and a dynamic temporal core: 
\begin{equation}\label{eq:factorization}
\Delta W(t) = B \cdot F_t \cdot A
\end{equation} 
$B$ and $A$ identify the embedded submanifold within the rank-$r$ latent space, and $F_t$ governs the manifold generalization dynamics. We demonstrate in Sec.~\ref{sec:justification} that this structured form is not a restrictive approximation but rather a sufficient representation for any sequence of low-rank updates that share a common support. This confirms that the factorization captures the full expressive power of the independent adapters, provided the updates reside within a shared ambient subspace. 

\vspace{-0.5em}
\subsection{Instantiations of the Temporal Core}
\vspace{-0.5em}

As established in Section~\ref{sec:param_increment}, the trajectory of the optimal update $\Delta W_t$ is diffeomorphic to the continuous evolution of the data distribution $\mathcal{D}_t$. Under the shared-basis factorization $\Delta W_t = B F_t A$, all temporal variation is captured by the low-dimensional core $F_t$ while $B$ and $A$ remain fixed. Because the induced map on the active subspace is a linear isomorphism under full-rank bases, this diffeomorphic structure is preserved in $F_t$, making the core dynamics topologically equivalent to the intrinsic data-stream dynamics. We therefore choose the parameterization of $F_t$ to match the inductive bias of the underlying evolution, and consider three instantiations for different regimes:

\begin{itemize}[left = 0em]
\item \textbf{Continuous Linear Dynamical Systems:}
If the data evolution is hypothesized to follow a smooth, continuous flow governed by an Autonomous Linear Dynamical System, we propose Lin-dym MaT-LoRA, which models the trajectory analytically using Lie algebra. Assuming the rate of change is governed by a constant velocity field $\mathcal{W}$, the core evolves as:
\begin{equation}
    F_t = \exp(t \mathcal{W}) \cdot F_0
\end{equation}
where $\mathcal{W} \in \mathbb{R}^{r' \times r'}$ is a learnable coefficient matrix and $\exp(\cdot)$ denotes the matrix exponential. This analytic form enforces a strong inductive bias for structural continuity.

\item \textbf{Sequential Markovian Evolution:}
When the data evolution possesses the Markov property, a recursive modeling approach is required. We propose Markv MaT-LoRA, and parameterize $F_t$ using a Recurrent Neural Network (e.g., RNN or LSTM) to capture autoregressive dependencies:
\begin{equation}
    \text{vec}(F_t) = \text{RNN}(h_{t-1}),
\end{equation}
This formulation allows the system to generate the current core $F_t$ based on the trajectory of previous time steps, making it ideal for path-dependent shifts.

\item \textbf{Arbitrary Non-Linear Dynamics:}
For scenarios where the domain shift exhibits complex, high-dimensional non-linear patterns without clear sequential dependency, we leverage the universal approximation capability of Multi-Layer Perceptrons (MLPs), named Non-lin MaT-LoRA. In this setting, the core matrix is modeled as a direct function of the timestamp $t$:
\begin{equation}\text{vec}(F_t) = \text{MLP}(t)\end{equation}
This form fits a global function to the temporal manifold, suitable for deterministic environments where the parameter state is strictly dependent on $t$.

\end{itemize}

\section{Theoretical Analysis}\label{sec:theory}

\vspace{-0.5em}
\subsection{Parameter Efficiency Analysis}
\vspace{-0.5em}
Beyond geometric coherence, the proposed factorization is parameter-efficient. Consider LoRA updates of the form $\Delta W_t = B_t A_t$ with rank $r$, where $B_t\in\mathbb{R}^{d\times r}$ and $A_t\in\mathbb{R}^{r\times k}$. Maintaining an independent adapter per domain yields
\begin{equation}
\mathcal{O}(\text{Multi}) = T\,(d r + r k) = T(d+k)r.
\end{equation}
A static single-adapter baseline uses
\begin{equation}
\mathcal{O}(\text{Single}) = d r + r k = (d+k)r.
\end{equation}
Our manifold-constrained parameterization shares the large bases $B\in\mathbb{R}^{d\times r'}$ and $A\in\mathbb{R}^{r'\times k}$ across time and learns only a small time-varying core $F_t$ (via a lightweight model with $|W_F|$ parameters), giving
\begin{equation}
\mathcal{O}(\text{Ours}) = (d r' + r' k) + |W_F| = (d+k)r' + |W_F|.
\end{equation}
With $r'\le r \ll \min(d,k)$ and $|W_F|$ small, our total parameter cost is dominated by the shared bases and remains essentially independent of $T$. Consequently, compared to Discrete Multi-LoRA, MaT-LoRA reduces adapter storage by an approximate factor of $T$ (up to the minor $|W_F|$ overhead), while still allowing domain-specific adaptation through the compact core $F_t$. This yields a scalable and VRAM-friendly alternative for long domain sequences, where keeping separate LoRA adapters quickly becomes prohibitive.

\vspace{-0.5em}
\subsection{Stability and Justification of the Shared-Basis Form}
\label{sec:justification}
\vspace{-0.5em}

To justify the shared-basis parameterization \(\Delta W_t = B F_t A\), we also show that, under the training procedure (gradient descent on LoRA factors), the learned factor subspaces do not drift far over time. Concretely, starting from the initial factors \((B_1,A_1)\), we prove that the components of \(B_t\) orthogonal to \(\mathrm{col}(B_1)\) and of \(A_t\) orthogonal to \(\mathrm{row}(A_1)\) remain small
throughout the updates, up to a controlled leakage term. This establishes that using fixed shared subspaces for \(B\) and \(A\) is meaningful, and that temporal variation can be concentrated into a
small core \(F_t\).

\begin{assumption}[Differentiability and GD dynamics]
\label{ass:gd_dynamics}
For each \(t\in[T]\), the objective \(\mathcal{J}_t:\mathbb{R}^{d\times r}\times\mathbb{R}^{r\times k}\to\mathbb{R}\)
is differentiable. The LoRA factors \((B_t,A_t)\) are updated by gradient descent
 \begin{equation}
B_{t+1}=B_t-\eta \nabla_B \mathcal{J}_t(B_t,A_t),
\qquad
A_{t+1}=A_t-\eta \nabla_A \mathcal{J}_t(B_t,A_t),
 \end{equation}
with step size \(\eta>0\).
\end{assumption}

\begin{assumption}[Reference (shared) subspaces]
\label{ass:reference_subspaces}
Let \(B:=B_1\) and \(A:=A_1\). Define the orthogonal projectors
 \begin{equation}
P_B := \mathrm{Proj}_{\mathrm{col}(B)} \in \mathbb{R}^{d\times d},
\qquad
P_A := \mathrm{Proj}_{\mathrm{col}(A^\top)} \in \mathbb{R}^{k\times k}.
 \end{equation}
Equivalently, \(\mathrm{Im}(P_A)=\mathrm{row}(A)\subset\mathbb{R}^k\).
\end{assumption}

\begin{assumption}[Dissipativity of out-of-subspace components]
\label{ass:dissipativity}
There exist constants \(\alpha_B,\alpha_A\ge 0\) and \(\varepsilon_B,\varepsilon_A\ge 0\) such that
for all \(t\in[T]\),
\begin{align}
\label{eq:dissip_B}
\big\langle (I-P_B)B_t,  (I-P_B)\nabla_B \mathcal{J}_t(B_t,A_t)\big\rangle
&\ge
\alpha_B  \|(I-P_B)B_t\|_F^2  -  \varepsilon_B,\\
\label{eq:dissip_A}
\big\langle A_t(I-P_A),  \nabla_A \mathcal{J}_t(B_t,A_t)(I-P_A)\big\rangle
&\ge
\alpha_A  \|A_t(I-P_A)\|_F^2  -  \varepsilon_A,
\end{align}
where \(\langle X,Y\rangle := \mathrm{tr}(X^\top Y)\).
\end{assumption}

\begin{theorem}[Stability of \((B_t,A_t)\) subspaces under GD]
\label{thm:BA_subspace_stability_multiass}
Under Assumptions~\ref{ass:gd_dynamics}--\ref{ass:dissipativity}, suppose the step size satisfies \(\eta\alpha_B \le 1\) and \(\eta\alpha_A \le 1\). Define
\begin{equation}
e_t^B := \|(I-P_B)B_t\|_F,\qquad e_t^A := \|A_t(I-P_A)\|_F,\qquad \Delta W_t := B_tA_t.
\end{equation}
Then for all \(t\in[T]\),
\begin{align}
\label{eq:eB_rec}
(e_{t+1}^B)^2 &\le (1-\eta\alpha_B)(e_t^B)^2 + \eta \varepsilon_B,\\
\label{eq:eA_rec}
(e_{t+1}^A)^2 &\le (1-\eta\alpha_A)(e_t^A)^2 + \eta \varepsilon_A.
\end{align}
Consequently, for all \(t\in[T]\),
\begin{align}
\label{eq:eB_closed}
(e_t^B)^2 &\le (1-\eta\alpha_B)^{t-1}(e_1^B)^2 + \frac{\varepsilon_B}{\alpha_B}\Big(1-(1-\eta\alpha_B)^{t-1}\Big) \le (e_1^B)^2 + \frac{\varepsilon_B}{\alpha_B}\qquad (\alpha_B>0),\\
\label{eq:eA_closed}
(e_t^A)^2 &\le (1-\eta\alpha_A)^{t-1}(e_1^A)^2 + \frac{\varepsilon_A}{\alpha_A}\Big(1-(1-\eta\alpha_A)^{t-1}\Big) \le (e_1^A)^2 + \frac{\varepsilon_A}{\alpha_A}\qquad (\alpha_A>0),
\end{align}
and when \(\alpha_B=0\) (or \(\alpha_A=0\)) the bound holds with the convention \(\frac{\varepsilon_B}{\alpha_B}\big(1-(1-\eta\alpha_B)^{t-1}\big) := (t-1)\eta\varepsilon_B\) (or \((t-1)\eta\varepsilon_A\)). Moreover, the induced updates satisfy
\begin{equation}
\label{eq:DeltaW_leak_multiass}
\|(I-P_B)\Delta W_t\|_F \le e_t^B \|A_t\|_2,\qquad \|\Delta W_t(I-P_A)\|_F \le \|B_t\|_2 e_t^A.
\end{equation}
\end{theorem}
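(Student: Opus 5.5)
The plan is to work directly with the out-of-subspace components. First derive the one-step recursions \eqref{eq:eB_rec}--\eqref{eq:eA_rec} from the GD update together with dissipativity. Then unroll them into the closed forms \eqref{eq:eB_closed}--\eqref{eq:eA_closed}. The leakage bounds \eqref{eq:DeltaW_leak_multiass} come last, from an elementary norm inequality.

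\textbf{Step 1 (one-step recursion).} Apply \(I-P_B\) to the update in Assumption~\ref{ass:gd_dynamics}. Because \(P_B\) is fixed (Assumption~\ref{ass:reference_subspaces}), this gives \((I-P_B)B_{t+1}=(I-P_B)B_t-\eta(I-P_B)\nabla_B\mathcal{J}_t\). Expanding the squared Frobenius norm yields
\[
(e^B_{t+1})^2=(e^B_t)^2-2\eta\big\langle (I-P_B)B_t,(I-P_B)\nabla_B\mathcal{J}_t\big\rangle+\eta^2\|(I-P_B)\nabla_B\mathcal{J}_t\|_F^2 .
\]
Inserting \eqref{eq:dissip_B} bounds the cross term and gives \((1-2\eta\alpha_B)(e_t^B)^2+2\eta\varepsilon_B+\eta^2\|(I-P_B)\nabla_B\mathcal{J}_t\|_F^2\). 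The same computation for \(A\) uses right multiplication by \(I-P_A\) and \eqref{eq:dissip_A}.

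\textbf{Main obstacle: the quadratic gradient term.} To reach the stated form \((1-\eta\alpha_B)(e^B_t)^2+\eta\varepsilon_B\), I need
\[
\eta\|(I-P_B)\nabla_B\mathcal{J}_t\|_F^2\le \alpha_B (e_t^B)^2-\varepsilon_B .
\]
This does not follow from Assumptions~\ref{ass:gd_dynamics}--\ref{ass:dissipativity} alone. The first thing I would try is to read the dissipativity inequality as also supplying a co-coercivity-type control of the projected gradient, in the spirit of the standard GD contraction argument, which is where \(\eta\alpha_B\le 1\) enters. Concretely, I would add (or make explicit) a bound of the form \(\|(I-P_B)\nabla_B\mathcal{J}_t\|_F^2\le L_B\langle (I-P_B)B_t,(I-P_B)\nabla_B\mathcal{J}_t\rangle+c\) with \(\eta L_B\le 1\). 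The \(\eta^2\) term is then absorbed by half of the \(-2\eta\langle\cdot,\cdot\rangle\) term. What remains is \(-\eta\langle\cdot,\cdot\rangle\), which dissipativity turns into \(-\eta\alpha_B(e^B_t)^2+\eta\varepsilon_B\) (with \(c\) folded into \(\varepsilon_B\)). This is the step I expect to need the most care.

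\textbf{Step 2 (unrolling).} Let \(\rho=1-\eta\alpha_B\), which lies in \([0,1]\) because \(\eta\alpha_B\le 1\). Iterating \(x_{t+1}\le\rho x_t+\eta\varepsilon_B\) by induction gives
\[
x_t\le\rho^{t-1}x_1+\eta\varepsilon_B\sum_{j=0}^{t-2}\rho^j .
\]
For \(\alpha_B>0\) the geometric sum equals \((1-\rho^{t-1})/(\eta\alpha_B)\), which yields \eqref{eq:eB_closed}. The cruder bound \((e_1^B)^2+\varepsilon_B/\alpha_B\) follows from \(\rho^{t-1}\le1\) and \(1-\rho^{t-1}\le1\). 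For \(\alpha_B=0\) we have \(\rho=1\), so the sum is \(t-1\), which matches the stated convention. The argument for \(A\) is identical.

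\textbf{Step 3 (leakage).} Write \((I-P_B)\Delta W_t=\big((I-P_B)B_t\big)A_t\) and \(\Delta W_t(I-P_A)=B_t\big(A_t(I-P_A)\big)\). The inequalities \(\|XY\|_F\le\|X\|_F\|Y\|_2\) and \(\|XY\|_F\le\|X\|_2\|Y\|_F\) then give \eqref{eq:DeltaW_leak_multiass} directly.
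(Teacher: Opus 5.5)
Your Steps 1--3 coincide with the paper's proof: the same expansion of $\|(I-P_B)B_{t+1}\|_F^2$, the same use of \eqref{eq:dissip_B} on the cross term, the same unrolling, and the same $\|XY\|_F\le\|X\|_F\|Y\|_2$ leakage bound. The only divergence is the term $\eta^2\|(I-P_B)\nabla_B\mathcal{J}_t\|_F^2$. There your diagnosis is correct and the paper's argument is not.

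\textbf{Where the paper's argument fails.} The paper combines Cauchy--Schwarz with \eqref{eq:dissip_B} to get $\|(I-P_B)\nabla_B\mathcal{J}_t\|_F\ge\alpha_B e_t^B-\varepsilon_B/e_t^B$. This is a \emph{lower} bound, but the paper then uses it as an upper bound, with the sign of the $\varepsilon_B$ term flipped. Both branches of its case split also end with inequalities such as $(e_t^B)^2+3\eta\varepsilon_B\le(1-\eta\alpha_B)(e_t^B)^2+\eta\varepsilon_B$. That would require $\eta\alpha_B(e_t^B)^2+2\eta\varepsilon_B\le 0$.

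\textbf{The statement itself is false.} In fact \eqref{eq:eB_rec} cannot be derived from Assumptions~\ref{ass:gd_dynamics}--\ref{ass:dissipativity}.
Since $B:=B_1$, we have $e_1^B=0$, so \eqref{eq:dissip_B} at $t=1$ reads $0\ge-\varepsilon_B$ and constrains nothing. Take $\mathcal{J}_1(X,Y)=\langle G,X\rangle$ with $\|(I-P_B)G\|_F^2>\varepsilon_B/\eta$, and $\mathcal{J}_t(X,Y)=\tfrac{\alpha_B}{2}\|(I-P_B)X\|_F^2$ for $t\ge 2$. Every assumption holds; the $A$-factor never moves, so \eqref{eq:dissip_A} is trivial. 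Yet $(e_2^B)^2=\eta^2\|(I-P_B)G\|_F^2>\eta\varepsilon_B$.
The failure is not only a $t=1$ artifact. Dissipativity is one-sided, so $(I-P_B)\nabla_B\mathcal{J}_t=c\,(I-P_B)B_t$ with $\eta c>2$ satisfies it while making $e_t^B$ grow geometrically. So no proof from the stated hypotheses exists. An upper bound on the projected gradient, of smoothness or co-coercivity type, is genuinely missing, exactly as you say.

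\textbf{What your repair actually proves.} The repair is the right kind, but you should state precisely what it gives.
\begin{itemize}
\item With $c>0$, your bound only guarantees $\langle (I-P_B)B_t,(I-P_B)\nabla_B\mathcal{J}_t\rangle\ge -c/L_B$, so the cross term may be negative. Carrying this through gives $(e_{t+1}^B)^2\le(1-\eta\alpha_B)(e_t^B)^2+\eta(2\varepsilon_B+\eta c)$. The recursion and closed forms then hold with $\varepsilon_B$ replaced by $2\varepsilon_B+\eta c$, not with $\varepsilon_B$ itself.
\item With $c=0$ you recover the exact constant. But then at $t=1$, where $e_1^B=0$, your hypothesis forces $(I-P_B)\nabla_B\mathcal{J}_1=0$, and by induction $e_t^B\equiv 0$, so the result becomes trivial.
\end{itemize}
Any hypothesis that reproduces the stated constant must in particular give $\eta\|(I-P_B)\nabla_B\mathcal{J}_1(B_1,A_1)\|_F^2\le\varepsilon_B$.

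Also, in your scheme the condition that absorbs the $\eta^2$ term is $\eta L_B\le 1$. The condition $\eta\alpha_B\le1$ is only needed in Step 2, to keep $\rho\ge 0$.

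So present your argument as a proof of a corrected theorem with the added hypothesis, not of the statement as written.
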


The detailed proof is provided in Section~\ref{proof1}. Theorem~\ref{thm:BA_subspace_stability_multiass} shows that, under the dissipativity conditions in Assumption~\ref{ass:dissipativity}, the out-of-subspace energies \((e_t^B)^2\) and \((e_t^A)^2\) follow a contractive recursion with additive noise \(\varepsilon_B,\varepsilon_A\). Hence, if the initialization
is aligned (\(e_1^B=e_1^A=0\)) and leakage is small, then \(\mathrm{col}(B_t)\) stays close to
\(\mathrm{col}(B_1)\) and \(\mathrm{row}(A_t)\) stays close to \(\mathrm{row}(A_1)\) for all \(t\), which
in turn implies that the induced updates \(\Delta W_t=B_tA_t\) largely remain supported on the shared
subspaces, as quantified by \eqref{eq:DeltaW_leak_multiass}. Therefore, a shared \((B,A)\) with a
time-varying core is a consistent structural restriction rather than an ad hoc constraint.

\section{Experiments}

This section empirically evaluates \textit{MaT-LoRA} through comprehensive quantitative and qualitative analyses. We conducted experiments on four representative LLMs: DistilBERT(~\cite{sanh2019distilbert}), Qwen3-0.6B(~\cite{yang2025qwen3}), TinyLLaMA-1.1B(~\cite{zhang2024tinyllama}), and LLaMA3-8B(~\cite{dubey2024llama}). By leveraging both synthetic and real-world datasets to ensure robust assessment, our evaluation aims to: (1) assess the temporal generalizability of \textit{MaT-LoRA} within LLMs contexts; and (2) verify its fidelity in capturing underlying data dynamics.

\vspace{-0.5em}
\subsection{Synthetic Dataset}
\vspace{-0.5em}

Synthetic datasets serve as an effective instrument for validation, avoiding the confounding factors of unknown generative laws and high-dimensional noise inherent in real-world scenarios. By restricting domain evolution to pure mathematical transformations, these datasets strictly conform to the TDG hypothesis of time-controlled dynamics. Validating on such data confirms MaT-LoRA's capability to capture dynamics within a regime that faithfully follows the field's most fundamental assumptions.

\begin{figure*}[h]
    \centering
    \includegraphics[width=1\textwidth]{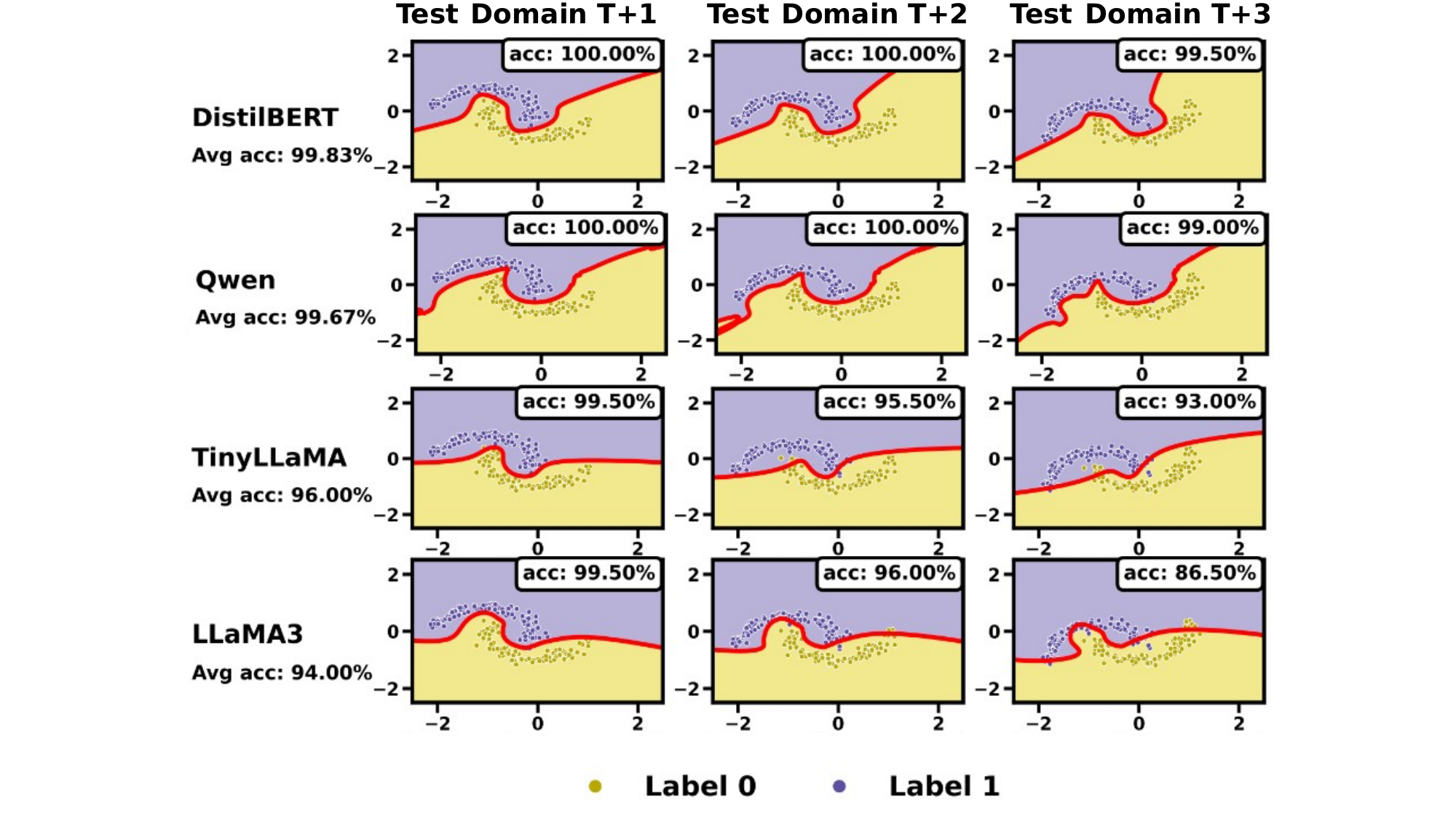}
    \caption{Visualization of Extrapolated Decision Boundaries on the Rotating 2-Moons Dataset across Four LLMs Backbones.}
    \label{fig:moons}
\end{figure*}

\textbf{Experimental Setup}: The \textbf{Rotating 2-Moons} dataset serves as a benchmark for controlled, continuous concept drift. In this setting, the classification boundary remains topologically invariant, while the data distribution undergoes a deterministic geometric shift. The sequence comprises 12 distinct domains, each containing 200 class-balanced instances generated via an $18^\circ$ counterclockwise rotation per time step, perturbed by Gaussian noise ($\sigma=0.10$). The first 9 domains constitute the training set, with the subsequent domains reserved for evaluation. To accommodate continuous 2D inputs, we project coordinates directly into the latent space via a linear layer, eschewing standard token and positional embeddings.

\textbf{Qualitative Analysis of Decision Boundaries}: Fig.~\ref{fig:moons} illustrates the decision boundaries derived from MaT-LoRA across four pre-trained LLMs: DistilBERT, Qwen, TinyLLaMA, and LLaMA-3. DistilBERT demonstrates superior generalization, with predictions that tightly track the ground-truth trajectory over multiple extrapolation steps, indicating effective capture of the underlying dynamics. Notably, despite the larger parameter counts of TinyLLaMA and LLaMA3 compared to Qwen, they exhibit clearer decision boundaries. While Qwen achieves superior test performance, visual inspection suggests it has captured spurious features, a phenomenon potentially attributable to its distinct Q, K normalization mechanism.

Overall, MaT-LoRA consistently enables pre-trained models—which inherently lack temporal extrapolation capabilities—to model data dynamics with high fidelity, demonstrating the robustness of our approach in enhancing generalization within the LLMs domain.

\vspace{-0.5em}
\subsection{Real-word Dataset}
\vspace{-0.5em}

We utilize three existing temporal misalignment datasets(AIC, NewsCLS, and Yelp) introduced by \cite{luu2022time}. AIC is a binary classification benchmark that requires identifying the publication venue (ICML vs. AAAI) based on a paper's title. NewsCLS is a three-class classification benchmark that predicts the publisher from news content. Yelp is a five-class classification benchmark that maps a user review to its corresponding numerical rating. Further datasets details are provided in \ref{dataset}.

\begin{table}[!ht]
    \centering
    \caption{Performance comparison on temporal domain datasets, reporting classification accuracy (\%).}
    \label{tab:main_results}
    \renewcommand{\arraystretch}{1.1} 
    \resizebox{0.7\linewidth}{!}{%
    \begin{tabular}{llccc}
    \toprule
    \hline
    \multicolumn{2}{c}{Method} & AIC & NewsCLS & Yelp \\
    \midrule
    \multicolumn{2}{l}{Offline}      & $84.00 \pm 5.58$ & $82.79 \pm 0.97$ & $69.59 \pm 0.92$ \\
    \multicolumn{2}{l}{IncFinetune}  & $90.58 \pm 0.63$ & $82.60 \pm 1.69$ & $70.46 \pm 0.28$ \\
    \multicolumn{2}{l}{Last Domain}  & $90.17 \pm 0.63$ & $80.55 \pm 2.07$ & $68.41 \pm 0.42$ \\
    \midrule
    \multirow{3}{*}{MaT-LoRA} 
    & Lin-dym  & $91.17 \pm 1.42$ & $83.72 \pm 4.89$ & $72.04 \pm 0.28$ \\
    & Markv    & $\textbf{91.67} \pm 2.02$ & $\textbf{84.86} \pm 1.98$ & $\textbf{72.29} \pm 0.12$ \\
    & Non-lin  & $90.75 \pm 1.75$ & $84.70 \pm 0.72$ & $71.78 \pm 0.24$ \\
    \hline
    \bottomrule
    \end{tabular}%
    }
    \vspace{-1em}
\end{table}

\textbf{Metrics}. Accuracy (\%) is used for classification tasks. All models were trained on training domains and then deployed on all unseen test domains. Each method’s experiments were repeated three times, with mean results and standard deviations reported. Detailed parameter settings for each dataset are provided in Appendix \ref{setup}.

As presented in Table \ref{tab:main_results}, all three variants of MaT-LoRA consistently outperform the baseline methods across real-world datasets. While approaches like Offline and LastDomain demonstrate limited efficacy, their failure to explicitly model temporal continuity results in significant performance gaps. In contrast, MaT-LoRA effectively captures the underlying evolution of data distributions. Notably, even the simplest linear variant performs exceptionally well, suggesting that real-world temporal variations and their manifestation in low-dimensional spaces may not be complex. Furthermore, our results demonstrate that all three datasets achieve state-of-the-art performance on RNN variants, indicating that distributional changes can be based on the evolution of previous states. Overall, these findings confirm MaT-LoRA as a robust benchmark for Temporal Domain Generalization in LLMs.

\begin{wraptable}{l}{0.5\textwidth}
    \centering
    \caption{Comparison of training and testing running time (seconds) on AIC.}
    \label{table:running time}
    \renewcommand{\arraystretch}{1.1}
    \resizebox{0.8\linewidth}{!}{%
    \begin{tabular}{lcc}
    \toprule
    \hline
    Method & Train & Test \\
    \midrule
    Offline       & $532 \pm 153$ & $2.11$ \\
    Last domain   & $241 \pm 31$  & $2.73$ \\
    IncFinetune   & $402 \pm 77$  & $2.46$ \\
    MaT-LoRA      & $740 \pm 18$ & $2.99$ \\
    \hline
    \bottomrule
    \end{tabular}%
    }
    \vspace{-10pt}
\end{wraptable}

\textbf{Running time}. MaT-LoRA targets TDG tasks within LLMs, where computational efficiency is paramount. To evaluate this, we conducted a comparative analysis against baselines on the AIC dataset over three independent runs, reporting the mean and standard deviation for training duration and the average inference latency. As shown in Table \ref{table:running time}, MaT-LoRA incurs only a marginal increase in training time compared to baselines, a slight overhead attributed to optimizing parameter generalization on the sub-manifold. Crucially, inference latency remains on par with baselines, demonstrating that MaT-LoRA effectively mitigates domain shifts while maintaining high efficiency during both training and testing.

\section{Conclusion}

In this work, we presented MaT-LoRA, a framework for temporal domain generalization in LLMs that exploits the low-dimensional manifold structure of parameter updates. By introducing a Manifold-Constrained Factorization, our approach decouples adaptation into time-invariant spatial bases and a dynamic temporal core, effectively reconciling the expressivity of domain-specific adapters with constant memory efficiency. Furthermore, the flexible parameterization of this core, ranging from continuous dynamical systems to arbitrary functions, enables robust generalization across diverse evolutionary patterns. Extensive experiments on both synthetic and real-world datasets validate the superior efficacy and computational efficiency of our proposed design.

\bibliographystyle{iclr2026_conference}
\bibliography{iclr2026_conference}

\clearpage

\appendix

\section{Proof of Theorem~\ref{thm:BA_subspace_stability_multiass}}\label{proof1}
\begin{proof}
We prove the \(B\)-part \eqref{eq:eB_rec}--\eqref{eq:eB_closed}; the \(A\)-part
\eqref{eq:eA_rec}--\eqref{eq:eA_closed} follows by the same argument using
Assumption~\ref{ass:dissipativity} and \eqref{eq:dissip_A}.

Fix \(t\in[T]\) and set \(R:=I-P_B\). By Assumption~\ref{ass:reference_subspaces}, \(P_B\) is an orthogonal
projector, hence \(R\) is also an orthogonal projector with \(R^\top=R\) and \(R^2=R\).
By Assumption~\ref{ass:gd_dynamics}, left-multiplying by \(R\) gives
 \begin{equation}
RB_{t+1}=RB_t-\eta R\nabla_B \mathcal{J}_t(B_t,A_t).
 \end{equation}
Then we have,
\begin{align}
\|RB_{t+1}\|_F^2
&=
\|RB_t\|_F^2
-2\eta\big\langle RB_t,  R\nabla_B \mathcal{J}_t(B_t,A_t)\big\rangle
+\eta^2\|R\nabla_B \mathcal{J}_t(B_t,A_t)\|_F^2.
\label{eq:expand}
\end{align}
Apply Assumption~\ref{ass:dissipativity} and \eqref{eq:dissip_B} to the inner product term:
 \begin{equation}
\big\langle RB_t,  R\nabla_B \mathcal{J}_t(B_t,A_t)\big\rangle
\ge
\alpha_B\|RB_t\|_F^2-\varepsilon_B.
 \end{equation}
Substituting into \eqref{eq:expand} yields
\begin{equation}
\label{eq:after_dissip}
\|RB_{t+1}\|_F^2
\le
(1-2\eta\alpha_B)\|RB_t\|_F^2
+2\eta\varepsilon_B
+\eta^2\|R\nabla_B \mathcal{J}_t(B_t,A_t)\|_F^2.
\end{equation}
By Cauchy-Schwarz inequality,
 \begin{equation}
\big\langle RB_t,  R\nabla_B \mathcal{J}_t(B_t,A_t)\big\rangle
\le
\|RB_t\|_F \|R\nabla_B \mathcal{J}_t(B_t,A_t)\|_F.
 \end{equation}
If \(\|RB_t\|_F>0\), combining with Assumption~\ref{ass:dissipativity} and \eqref{eq:dissip_B} gives
 \begin{equation}
\|R\nabla_B \mathcal{J}_t(B_t,A_t)\|_F
\ge
\alpha_B\|RB_t\|_F - \frac{\varepsilon_B}{\|RB_t\|_F}.
 \end{equation}
Hence
 \begin{equation}
 \begin{aligned}
\eta^2\|R\nabla_B \mathcal{J}_t(B_t,A_t)\|_F^2
&\le
\eta^2\left(\alpha_B\|RB_t\|_F + \frac{\varepsilon_B}{\|RB_t\|_F}\right)^2\\
&\le
2\eta^2\alpha_B^2\|RB_t\|_F^2 + 2\eta^2\frac{\varepsilon_B^2}{\|RB_t\|_F^2}.
\end{aligned}
 \end{equation}
We now use a case split.

\emph{Case 1: \(\|RB_t\|_F^2 \ge 2\eta\varepsilon_B\).}
Then \(2\eta^2\frac{\varepsilon_B^2}{\|RB_t\|_F^2}\le \eta\varepsilon_B\). Also, by
Assumption on step size, \(\eta\alpha_B\le 1\), so 
\begin{equation}
    2\eta^2\alpha_B^2\|RB_t\|_F^2\le 2\eta\alpha_B\|RB_t\|_F^2
\end{equation}
Therefore,
 \begin{equation}
\eta^2\|R\nabla_B \mathcal{J}_t(B_t,A_t)\|_F^2
\le
2\eta\alpha_B\|RB_t\|_F^2 + \eta\varepsilon_B.
 \end{equation}
Substituting into \eqref{eq:after_dissip} gives
 \begin{equation}
 \begin{aligned}
\|RB_{t+1}\|_F^2
&\le
(1-2\eta\alpha_B)\|RB_t\|_F^2 + 2\eta\varepsilon_B
+2\eta\alpha_B\|RB_t\|_F^2 + \eta\varepsilon_B\\
&=
\|RB_t\|_F^2 + 3\eta\varepsilon_B.
\end{aligned}
 \end{equation}
Moreover, in Case 1 we also have \(\eta\varepsilon_B \le \tfrac12\|RB_t\|_F^2\), hence
 \begin{equation}
\|RB_{t+1}\|_F^2 \le \|RB_t\|_F^2 + 3\eta\varepsilon_B
\le
(1-\eta\alpha_B)\|RB_t\|_F^2 + \eta\varepsilon_B
 \end{equation}
since \(\eta\alpha_B\le 1\).

\emph{Case 2: \(\|RB_t\|_F^2 < 2\eta\varepsilon_B\).}
Then trivially
 \begin{equation}
 \begin{aligned}
\|RB_{t+1}\|_F^2 &\le \|RB_t\|_F^2 + 2\eta\varepsilon_B\\
&\le
(1-\eta\alpha_B)\|RB_t\|_F^2 + \eta\varepsilon_B
\end{aligned}
 \end{equation}
using \(\eta\alpha_B\le 1\).

Combining the two cases yields the unified recursion
 \begin{equation}
\|RB_{t+1}\|_F^2 \le (1-\eta\alpha_B)\|RB_t\|_F^2 + \eta\varepsilon_B,
 \end{equation}
since \(e_t^B=\|RB_t\|_F\).

Let \(x_t:=(e_t^B)^2\). Then
\begin{equation}
    x_{t+1}\le (1-\eta\alpha_B)x_t+\eta\varepsilon_B
\end{equation}
If \(\alpha_B>0\), iterating gives
 \begin{equation}
 \begin{aligned}
x_t &\le (1-\eta\alpha_B)^{t-1}x_1
+ \eta\varepsilon_B\sum_{j=0}^{t-2}(1-\eta\alpha_B)^j\\
&=
(1-\eta\alpha_B)^{t-1}x_1
+ \frac{\varepsilon_B}{\alpha_B}\Big(1-(1-\eta\alpha_B)^{t-1}\Big),
\end{aligned}
 \end{equation}
If \(\alpha_B=0\), then \(x_{t+1}\le x_t+\eta\varepsilon_B\) so
\(x_t\le x_1+(t-1)\eta\varepsilon_B\).

Finally, 
 \begin{equation}
 \begin{aligned}
(I-P_B)\Delta W_t &= (I-P_B)B_tA_t  \Rightarrow \\
\|(I-P_B)\Delta W_t\|_F &\le \|(I-P_B)B_t\|_F\|A_t\|_2\\
&=e_t^B\|A_t\|_2,
\end{aligned}
 \end{equation}
and similarly \(\Delta W_t(I-P_A)=B_tA_t(I-P_A)\) gives
\begin{equation}
\begin{aligned}
\|\Delta W_t(I-P_A)\|_F &\le \|B_t\|_2 \|A_t(I-P_A)\|_F\\
&=\|B_t\|_2 e_t^A.
\end{aligned}
\end{equation}
\end{proof}

\section{Experiment}
\label{experiment}
\subsection{dataset}
\label{dataset}
We utilize the three real-world benchmarks proposed by \cite{luu2022time} to evaluate performance under temporal shifts:

\begin{itemize}[left = 1em]
\item \textit{AI venue classification (AIC)}: We utilize the AIC dataset proposed by \cite{luu2022time}, which constitutes a binary classification benchmark explicitly demonstrated to exhibit significant temporal distribution shifts. The task requires discriminating between papers published in AAAI and ICML across four sequential time intervals: 2009–2011, 2012–2014, 2015–2017, and 2018–2020. This setup serves as a challenging proxy for topic classification and author disambiguation under evolving data dynamics. A representative instance: \textit{Input:} ``PCA-SVM-Based Comprehensive Evaluation for Customer Relationship Management System of Power Supply Enterprise'' \textit{Output:} ICML (vs. AAAI)

\item \textit{News source classification (NewsCLS)}: Derived from the metadata of the Newsroom dataset, this benchmark constitutes a three-way classification task designed to identify publication sources(Fox News, The New York Times, and Washington Post) across four sequential time intervals: 2009–2010, 2011–2012, 2013–2014, and 2015–2016, the first three domains used for training and the last for testing. A representative instance: \textit{Input:} A Muslim woman said Sunday that her viral article explaining why she voted for Donald Trump has angered her liberal pals as well as other Muslims. \textit{Output:} FoxNews (vs NYTimes or WaPost)

\item \textit{Review rating classification (Yelp)}: This canonical sentiment analysis task involves predicting the numerical rating assigned by an author based on the review text~\cite{pang2002thumbs, dave2003mining}. Adopting the temporal partitioning strategy from \cite{luu2022time}, the dataset spans the years 2013 through 2019. To enhance training efficiency, we subsample the first 10,000 instances from each year. The data corresponding to the initial four years (2013–2016) constitutes the training set, while the subsequent three years (2017–2019) are reserved for testing. \textit{Input:} Best thai food in the area.  Everything was authentic and delicious.  Will definitely be back again and again. \textit{Output:} 5.0
\end{itemize}

\subsection{Experimental Setup}
\label{setup}
We employ a DistilBERT backbone (max\_len=512, batch size=2, LoRA $r=8$) and benchmark against three baseline strategies: \textit{Offline} (training on the union of all domains), \textit{LastDomain} (using only the most recent domain), and \textit{IncFinetune} (sequential training). All baselines utilize a 90/10 train-validation split and an early stopping protocol with a patience of 10 and an evaluation interval of 50 steps. For \textit{IncFinetune}, early stopping triggers the transition to the subsequent domain. In contrast, our proposed MaT-LoRA follows a fixed training schedule: 2 epochs for AIC and Yelp, and 10 epochs for NewsCLS to accommodate its larger scale.

\end{document}